\title{Human Texts Are Outliers: Detecting LLM-generated Texts via Out-of-distribution Detection}
\author{%
  Cong Zeng\thanks{Equal contribution. The names are ordered by coin flip. \textsuperscript{\Letter}Corresponding authors.}\\
    MBZUAI
  \And
    Shengkun Tang$^{\ast}$ \\
  MBZUAI 
  \And
  Yuanzhou Chen\\
  UCLA
  \And
  Zhiqiang Shen\\
  MBZUAI
  \AND
  Wenchao Yu\\
  NEC Lab 
  \And
  Xujiang Zhao\\
  NEC Lab
  \And
  Haifeng Chen\\
  NEC Lab
  \And
  Wei Cheng\textsuperscript{\Letter}\\
  NEC Lab
  \And
  Zhiqiang Xu\textsuperscript{\Letter}\\
  MBZUAI
}
\begin{document}

\maketitle

\begin{abstract}
    The rapid advancement of large language models (LLMs) such as ChatGPT, DeepSeek, and Claude has significantly increased the presence of AI-generated text in digital communication. This trend has heightened the need for reliable detection methods to distinguish between human-authored and machine-generated content. Existing approaches both zero-shot methods and supervised classifiers largely conceptualize this task as a binary classification problem, often leading to poor generalization across domains and models. In this paper, we argue that such a binary formulation fundamentally mischaracterizes the detection task by assuming a coherent representation of human-written texts. In reality, human texts do not constitute a unified distribution, and their diversity cannot be effectively captured through limited sampling. This causes previous classifiers to memorize observed OOD characteristics rather than learn the essence of `non-ID' behavior, limiting generalization to unseen human-authored inputs. Based on this observation, we propose reframing the detection task as an out-of-distribution (OOD) detection problem, treating human-written texts as distributional outliers while machine-generated texts are in-distribution (ID) samples. To this end, we develop a detection framework using one-class learning method including DeepSVDD and HRN, and score-based learning techniques such as energy-based method, enabling robust and generalizable performance. Extensive experiments across multiple datasets validate the effectiveness of our OOD-based approach. Specifically, the OOD-based method achieves 98.3\% AUROC and AUPR with only 8.9\% FPR95 on DeepFake dataset. Moreover, we test our detection framework on multilingual, attacked, and unseen-model and -domain text settings, demonstrating the robustness and generalizability of our framework. Code, pretrained weights,  and demo will be released openly at \hyperlink{https://github.com/cong-zeng/ood-llm-detect}{https://github.com/cong-zeng/ood-llm-detect}.
\end{abstract}
\section{Introduction}

In recent years, with the continuous iteration and rapid advancement of large language models (LLMs) such as ChatGPT \cite{achiam2023gpt}, DeepSeek \cite{liu2024deepseek} and Claude \cite{claude3}, the adoption of LLM-powered products in everyday life and professional settings has increased significantly \cite{m2022exploring,gao2022comparing,else2023chatgptfool}. While these text-generation models have greatly enhanced productivity and efficiency, they also pose risks of misuse \cite{adelani2020generating,ahmed2021detecting,lee2023language}. The phenomenon of hallucinations, instances where LLMs generate incorrect or misleading content can lead to serious consequences if such outputs are inadvertently relied upon \cite{rawte2023troubling}. As LLMs become increasingly powerful, there is a critical need for detection techniques that are stable, robust, and accurate in distinguishing machine-generated text from human-written content\cite{chakraborty2023possibilities,sadasivan2023can}.

Existing LLM detection methods can be categorized into three groups: watermarking techniques \cite{gloaguen2024black}, zero-shot statistical methods \cite{mitchell2023detectgpt}, and training-based approaches \cite{guo2024detective}. Watermarking methods embed identifiable signals into generated content during inference, offering strong detection when the model is cooperative, but failing entirely in black-box or adversarial settings. Zero-shot methods, which rely on distributional differences are model-agnostic and do not require training, making them flexible and efficient. But their performance is often unstable across domains and text lengths. In contrast, training-based methods, typically implemented as supervised classifiers trained on labeled examples, often achieve higher detection accuracy. Nevertheless, all these methods that require labeled data face the challenge of poor generalization across text detection from different domains or unseen models.

Most existing detectors, whether zero-shot or supervised, treat the detection as a binary classification task, aiming to distinguish between the distributions of human-written and LLMs-generated content by identifying a threshold or optimal decision boundary between their distributions. However, this binary classification paradigm carries inherent limitations. While prior work \cite{guo2024detective} has demonstrated that machine-generated text often exhibits consistent statistical patterns suggesting that it can be reasonably modeled as a learnable distribution, a fundamental challenge lies in \textit{\textbf{can human-written text truly be treated as a single, coherent distribution for the purpose of classification?}}    
 Intuitively, human language is inherently diverse: each author, writing style, domain, or genre may follow distinct linguistic patterns, leading to a highly heterogeneous and dispersed distribution in the feature space. This diversity poses a critical problem for binary classification models: by treating human-written text as a single distribution, existing approaches risk oversimplifying the underlying variability. Such oversimplification can result in poor generalization, especially when encountering human texts from unseen domains or stylistic backgrounds.

\begin{figure}[t]
  \centering
  \hspace{-8em}
  \begin{minipage}[c]{0.45\textwidth}
    \centering
    \includegraphics[width=\linewidth,height=0.75\linewidth,keepaspectratio]{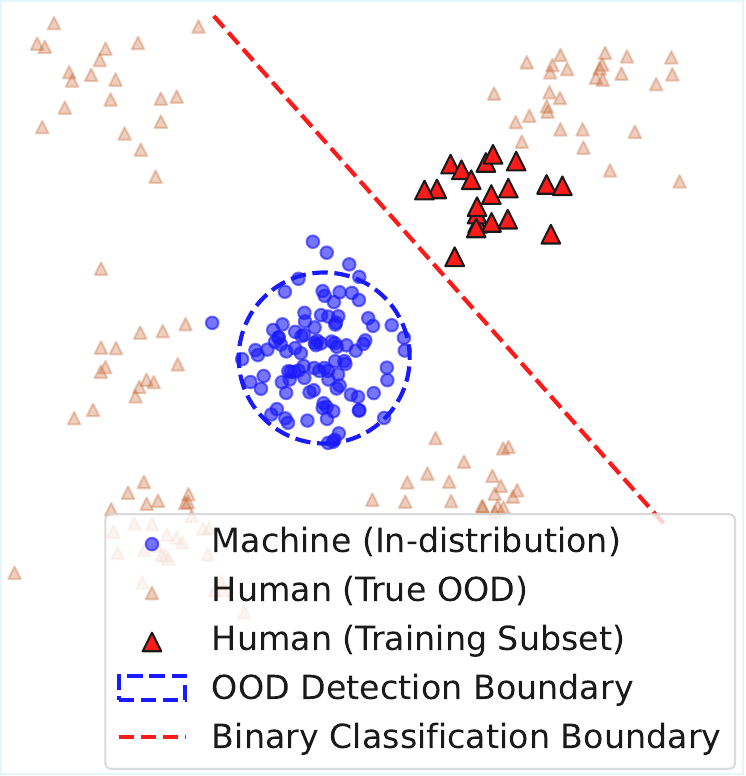}
    \label{Fig:binary_vs_ood_detection}
  \end{minipage}%
  \begin{minipage}[c]{0.35\textwidth}
    \centering
    \small
    \begin{tabular}{lcc}
      \toprule
      Data Type & Metric & Distance \\
      \midrule
      \multirow{3}{*}{Clean Data}
        & Intra-distance LLM & 0.3014 \\
        & Intra-distance human & 0.4747 \\
        & Inter-distance & 1.6048 \\
      \midrule
      \multirow{3}{*}{Attacked Data}
        & Intra-distance LLM & 0.1929 \\
        & Intra-distance human & 0.8698 \\
        & Inter-distance & 	0.7859 \\
      \bottomrule
    \end{tabular}
    \label{tab:comparison}
  \end{minipage}

  \caption{
  \textbf{Decision boundaries and model performance comparison.}
  Left: Decision boundaries under distributional asymmetry. 
  The binary classifier separates machine-generated text from limited human-written samples but fails to capture the variability in the true human distribution (e.g., true OOD subset). 
  Right Table: Quantitative comparison showing intra- and inter-distance for clean and attacked data, indicating that the distance of human text is larger than the distance among LLM-generated text.
  }
  \label{fig:motivation}
\end{figure}

Figure \ref{fig:motivation} Left figure illustrates the practical limitations of applying binary classification to this task. The in-distribution (ID) region, representing machine-generated text, is compact and well-covered by training data. However, human-written text exhibits variability as out-of-distribution samples, especially in the "true OOD" subset (shown as red triangles), which lies outside the classifier's detection boundary. As a result, a binary classifier (the red dashed line), trained on limited human samples, tends to overfit and produce a brittle decision boundary with poor generalization.  Moreover, as shown in the Right Table in Figure \ref{fig:motivation}, we compute the average cosine inter- and intra-distance of LLM-generated and human-written texts on the Deepfake dataset (Clean Data) and RAID dataset (Attacked Data).  The embedding backbone we use is RoBERTa pretrained with contrastive loss. The results show that the Intra-distance of LLM is smaller than the human while the inter-distance between human and LLM is relatively large, supporting our OOD hypothesis that LLM text forms a coherent in-distribution cluster, while human-written texts, which span varied styles and domains, act as natural OOD examples.



In this work, we theoretically analyze the representation incompleteness of human text distribution and the corresponding failure reason of binary classifiers. Furthermore, to address the issue of inherent asymmetry between human and machine-generated text distributions, we propose to recast machine-generated text detection as an out-of-distribution (OOD) detection task rather than a binary classification problem. OOD detection methods focus on modeling only the in-distribution region and treat any deviation from it as unknown samples. Under this new formulation, we treat machine-generated text as the in-distribution (ID) samples, assuming that it originates from a single or limited family of models with shared statistical characteristics. In contrast, human-written text is considered as out-of-distribution samples, reflecting its open-ended, diverse, and difficult-to-model nature. OOD detection approach yields a more principled and robust decision boundary that fits the open-ended diversity of human-authored texts (the blue dashed line in Figure 
\ref{fig:motivation} Left).

To validate our core hypothesis, that machine-generated texts form a learnable distributional core while human-written texts are distributional outliers, we systematically evaluate a range of established OOD detection methods. We implement this OOD detection framework by training a one-class or score-based classifier that models the distribution of machine-generated texts. We incorporate and validate different OOD detection methods including DeepSVDD \cite{ruff2018deep}, HRN \cite{hu2020hrn} and Energy-based method \cite{liu2020energy}. At inference time, any input that significantly deviates from this learned distribution is flagged as OOD samples and considered to be human-written. Empirically, we compare our method against traditional supervised binary classifiers and recent zero-shot detectors. Remarkably, all OOD-based methods obtain superior detection performance. Specifically, the OOD-based method achieves 98.3\% AUROC and AUPR with only 8.9\% FPR95 on DeepFake dataset. Moreover, we test our method on \textit{multilingual} and \textit{attacked} text setting on M4 and RAID datasets, where our method achieves SoTA performance. Finally, we validate the generalizability of our method on unseen models and domain scenarios, highlighting the capacity to generalize beyond specific domains or stylistic boundaries. In summary, our contributions are as follows:
\begin{itemize}
    \item We identify the fundamental topological and statistical distinctions between human-written and LLM-generated texts in the representation space and theoretically prove the representation incompleteness of human-written text distribution and the corresponding failure reason of binary classifiers.
    \item Based on the theoretical analysis, we propose reframing machine-generated text detection as an out-of-distribution (OOD) detection problem instead of a binary classification task. We solve this problem by designing a detection framework grounded in one-class and score-based learning that leverages the distributional convergence of LLM outputs. 
    \item Empirically, we incorporate various OOD detection methods including DeepSVDD, HRN and Energy-based method. Extensive experiments on multiple benchmarks indicate that OOD-based methods achieve superior performance compared with baselines. Moreover, the strong detection performance on multilingual, attacked and unseen-model and -domain texts demonstrates the robustness and generalizability of our framework.
\end{itemize}

\section{Related Works}
\paragraph{LLMs-generated Text Detection.}
Existing approaches for detecting LLMs-generated text can be broadly categorized into three main categories \cite{yang2023survey}: watermarking \cite{abdelnabi2021adversarial,kirchenbauer2023watermark,yoo2023robust,gloaguen2024black,christ2024undetectable}, zero-shot methods \cite{ippolito2019automatic, bao2023fast, venkatraman2023gptwho, miao2023efficient,  krishna2023paraphrasing, mireshghallah2023smaller, tulchinskii2023intrinsic,yang2023zero}, and supervised training-based classifiers \cite{gehrmann2019gltr,liu2022coco,hu2023radar,wang2023seqxgpt,li2023origin,wu2023llmdet,guo2023authentigpt,yu2024dpic}. Watermarking methods operate under a different paradigm compared to the other two by embedding identifiable patterns or signals into the text during model training, but is only feasible for model providers. In contrast, zero-shot and supervised training-based approaches are typical post hoc methods and both frame the detection task as a binary classification problem: given a candidate text, determine whether it was generated by an LLM (positive sample) or written by a human (negative sample).  Zero-shot detectors, such as DetectGPT \cite{mitchell2023detectgpt}, DetectLLM \cite{su2023detectllm}, DNA-GPT \cite{yang2023dna}, Fast-DetectGPT \cite{bao2023fast}, Binoculars \cite{hans2024spotting}, BiScope \cite{guo2024biscope}, Raidar \cite{mao2024raidar}, and DALD \cite{NEURIPS2024_62e5f22b}, typically rely on the statistical analysis of the logits on distribution irregularities between human- and machine-written texts, and distinguish them using threshold-based decision rules. More recently, detection has also been approached through statistical hypothesis testing methods \cite{zhang2024detecting, songdeep} which reframe detection as a relative test problem. Supervised training-based methods generally achieve higher accuracy by training a binary classifier on labeled datasets containing human and machine-generated texts. These approaches often follow a pipeline that uses pretrained language models (e.g., RoBERTa, T5 ) as embedding extractors, followed by a classifier layer. Notable training based examples include GPT-Sentinel \cite{chen2023gpt}, GhostBuster \cite{verma2023ghostbuster}, GPTZero \cite{tian2023gptzero}. Building on these backbones, RADAR \cite{hu2023radar} adopts an adversarial training strategy, while CoCo \cite{liu2022coco} and DeTeCtive \cite{guo2024detective} leverage contrastive learning in the feature space to distinguish between clusters associated with different LLMs and human-authored text. Despite their effectiveness in specific detection scenarios, all of these methods fundamentally assume that human-written text, the negative class, can be modeled as a unified and coherent distribution. However, this assumption overlooks the inherent diversity and stylistic variability of human language, limiting generalization and robustness when applied to out-of-domain or heterogeneous human text samples.

\paragraph{Out-of-Distribution Detection.}

Out-of-distribution (OOD) detection \cite{pang2021deep} is a long-standing problem in machine learning, extensively studied in computer vision and anomaly detection \cite{sabokrou2018adversarially,chalapathy2018anomaly,ruff2018deep,perera2019ocgan,wang2019effective,deecke2019image}. This task focuses on identifying inputs that deviate from the data distribution observed during training. A wide range of OOD detection techniques have been proposed. One-class learning approaches, such as One-Class SVM \cite{scholkopf2001estimating} and Support Vector Data Description (SVDD) \cite{tax2004support}, attempt to learn a tight boundary around ID samples without requiring any OOD data. In the deep learning era, variants such as DeepSVDD \cite{ruff2018deep} and autoencoder-based models have extended these principles to high-dimensional representation spaces. Score-based methods \cite{liang2017enhancing,lee2018simple,devries2018learning,ren2019likelihood,hsu2020generalized,chen2021atom}, including softmax score \cite{hendrycks2016baseline}, and energy score \cite{liu2020energy}, estimate confidence scores during inference to identify inputs with low model certainty. More recent trends include contrastive learning and representation clustering \cite{liu2021anomaly,hu2022driver}, which aim to separate ID and OOD samples in embedding space without relying on labeled outliers. In the context of natural language processing (NLP), OOD detection presents unique challenges \cite{fei2016breaking}. Pretrained language models such as BERT and RoBERTa have been used to extract embeddings, which are then analyzed using one-class or score-based techniques \cite{lang2023survey}. However, few studies have explored OOD detection in the context of LLM-generated text. Our work is the first to systematically apply OOD techniques to this detection problem, offering improved generalization and robustness across models.
\section{Methods}
\newcommand{\Xspace}{\mathcal{X}}
\newcommand{\Dtv}[2]{D_{\text{TV}}(#1 \parallel #2)}
\newcommand{\Dkl}[2]{D_{\text{KL}}(#1 \parallel #2)}
\newcommand{\expect}[2]{\mathbb{E}_{#1 \sim #2}}

\newcommand{\PH}{P_{H}}
\newcommand{\PM}{P_{M}}
\newcommand{\PHgt}{\hat{P}_{H}}
\newcommand{\PMgt}{\hat{P}_{M}}

In this section, we present the details of our proposed approach. Section \ref{sec:3_1_Problem_Reformulation}formalizes the problem of AI-generated text detection and motivates our reformulation of the task as an out-of-distribution (OOD) detection problem. Section \ref{sec:3_2_method_overview} outlines the overall detection pipeline. Section \ref{sec:3_3_ood_methods} describes the specific OOD detection methods we adopt—DeepSVDD, HRN, and Energy-based method.

\subsection{Problem Reformulation: Why Binary Classification Fails for Human Text Detection?}
\label{sec:3_1_Problem_Reformulation}
We aim to detect whether a given text is generated by a large language model (LLM) or authored by a human. Traditionally, this problem has been framed as a binary classification task, where the detector is trained to distinguish between human-written and machine-generated samples using supervised learning. Formally, let $\Xspace$ be the text space, and consider a dataset $\cD = \big \{ (x, y) \big| x \in \cX, y \in \{0, 1\} \big \}$ generated with class probabilities $q_M, q_H$ and in-class distributions $\PH, \PM$: 
\begin{equation}
\begin{aligned}
&\mathbb{P}(y=0) = q_M, \quad \mathbb{P}(y=1) = q_H = 1 - q_M, \\
&\mathbb{P} (x|y=0) = \PM(x), \quad \mathbb{P} (x|y=1) = \PH(x), 
\end{aligned}
\end{equation}
where the label $y=0$ corresponds to machine-generated text, and $y=1$ corresponds to human-generated text. 
The following concept characterizes the ``correctness'' of such a data distribution. 

\begin{assumption} [Human-Machine Distinction Hypothesis] \label{asmp: hm distinction} 
We assume for any given text $x \in \cX$, there exists a ground truth probability $\hat{p}_M(x)$ that $x$ is human-generated. We say a data distribution defined by the tuple $(q_M, q_H, \PM, \PH)$ is consistent with the ground truth $\hat{p}_M(x)$ if, for $x \in \cX$ and label $y \in \{ 0, 1 \}$, 
$$\frac {q_M \cdot \PM(x)} {q_H \cdot \PH(x)} = \frac {\hat{p}_M(x)} {1 - \hat{p}_M(x)}. $$
\end{assumption}

Note that this is perfectly in line with the Bayes rule: $\mathbb{P}(y=0|x) = q_M \PM(x) / \big[ q_M \PM(x) + q_H \PH(x) \big]$. 
The goal now is to train a classifier $f: \mathcal{X} \to  \{0,1\}$ that distinguishes between the two distributions: 
\begin{equation}
f(x) \in \{0, 1\}, \quad \text{where} \quad 
\begin{cases}
f(x) = 0 & \text{if } x \sim P_{\text{human}}, \\
f(x) = 1 & \text{if } x \sim P_{\text{machine}}.
\end{cases}
\end{equation}
In this setting, the classifier is typically trained with a cross-entropy loss, 
which encourages the model to maximize confidence over both classes. The training objective is:
\begin{align} \label{eq:ce loss}
\mathcal{L}_{\mathrm{CE}} (f\mid\cD) &= - \expect{(x, \hat{y})} {\cD} \big[ \log p_\theta(y = \hat{y} \mid x) \big]\\
&= -q_M \mathbb{E}_{x \sim \PM} \big[ \log p_\theta(y = 0 \mid x) \big] - q_H \mathbb{E}_{x \sim \PH} \big[ \log p_\theta(y = 1 \mid x) \big].
\end{align}
It can be proven (See Appendix) that the loss reaches minimum $\expect{(x, y)}{\cD} [\cH \big( \hat{p}_M (x)\big)]$ when $p_\theta(y=0 \mid x) = \hat{p}_M(x)$, where $\cH$ denotes entropy, thus encouraging the model to align itself with the ground truth label distribution. 
However, we posit that textual distributions generated by different human authors exhibit substantial heterogeneity, characterized by divergent stylistic patterns, domain-specific variations, and individualized linguistic fingerprints. 
To formalize this intuition, we consider an open-world human text distribution with intrinsic heterogeneity $\PHgt$, which is much more diverse than $\PH$, and state the following theorem. 
\begin{theorem} \label{thm: good data bad classifier}
Consider training distribution $\cD = (q_M, q_H, \PM, \PH)$ and real-world distribution $\hat{\cD} = (q_M, q_H, \PMgt, \PHgt)$ both consistent with the ground truth probability $\hat{p}_M(\cdot)$. If the Pearson $\chi^2$ divergence $D := D_{\chi^2} (\PH || \PHgt)$ is large, then for an arbitrary suboptimality $\Delta$, there exists a binary classifier $f$, such that: 
\begin{itemize}
\item The loss for $\mathcal{D}$ is close to optimal: 
$\cL_{\mathrm{CE}} \big( f \big | \cD \big) < \expect{(x, y)}{\cD} [\cH \big( \hat{p}_M (x)\big)] + \Delta$; 
\item The loss for $\hat{\mathcal{D}}$ is high: $\cL_{\mathrm{CE}} \big( f \big| \hat{\cD} \big) > \expect{(x, y)}{\hat{\cD}} [\cH \big( \hat{p}_M (x)\big)] + 0.9 q_H (D + 1) \cdot \Delta$. 
\end{itemize}
\end{theorem}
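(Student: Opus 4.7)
The plan is to reduce the theorem to an analytic comparison of two integrals via the cross-entropy decomposition and the consistency hypothesis, and then realize a likelihood-ratio weight whose expectation recovers the Pearson $\chi^2$ factor.

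First I would invoke the decomposition promised in the appendix, which rewrites the excess cross-entropy loss under any distribution as a marginal-weighted KL, $\cL_{\mathrm{CE}}(f\mid\cD)-\EE_{(x,y)\sim\cD}[\cH(\hat p_M(x))]=\EE_{x\sim m_\cD}[\mathrm{KL}(\hat p_M(x)\,\|\,g(x))]$, with $g(x):=p_\theta(y=0\mid x)$ and $m_\cD=q_M\PM+q_H\PH$. A direct calculation using Assumption~\ref{asmp: hm distinction} to eliminate $\PM$ in favor of $\PH$ yields the parallel forms $m_\cD(x)=q_H\PH(x)/(1-\hat p_M(x))$ and $m_{\hat\cD}(x)=q_H\PHgt(x)/(1-\hat p_M(x))$. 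Setting $w(x):=\mathrm{KL}(\hat p_M(x)\,\|\,g(x))/(1-\hat p_M(x))\ge 0$, the two conclusions of the theorem collapse to exhibiting a nonnegative weight $w$ on $\Xspace$ with $\int\PH\,w\le\Delta/q_H$ and $\int\PHgt\,w>0.9(D+1)\Delta$; any such $w$ is realizable by a valid classifier because the binary KL sweeps $[0,\infty)$ continuously as $g(x)$ varies in $(0,1)$.

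The candidate I would try is a likelihood-ratio weight of the form $w(x)=c\cdot\PHgt(x)/\PH(x)$ with $c:=\Delta/q_H$. The training-side integral is $\int\PH\,w=c\int\PHgt=c$, landing the training excess exactly at $\Delta$; the real-world integral is $\int\PHgt\,w=c\int\PHgt^2/\PH$, delivering a real-world excess proportional to $\int\PHgt^2/\PH$. The Pearson $\chi^2$ identity then links this integral to $D+1$ in the convention of Theorem~\ref{thm: good data bad classifier}, producing the factor $(D+1)\Delta$ which comfortably exceeds the required $0.9q_H(D+1)\Delta$ once $q_H<1$, the slack $0.9$ absorbing the limit refinement below.

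The main obstacle is handling the singular contribution where $\PHgt/\PH$ is unbounded, especially on the set $\{\PH=0,\,\PHgt>0\}$ where the likelihood ratio is formally infinite. I would resolve this by truncating $w$ at height $T$, realizing the truncated weight through a valid classifier $g_T$, and passing to $T\to\infty$: monotone convergence keeps the $\PH$-integral pinned at $c$ while the $\PHgt$-integral rises to the full $\chi^2$ value, so the strict inequality survives the limit for $T$ large enough. A secondary but subtle checkpoint is matching the direction of the Pearson divergence in Assumption~\ref{asmp: hm distinction} to the identity used here — once the roles of $\PH$ and $\PHgt$ are fixed consistently between the consistency hypothesis and the weight construction, the entire chain goes through.
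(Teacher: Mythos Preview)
Your proposal is correct and follows essentially the same line as the paper: both set the pointwise KL excess proportional to the likelihood ratio $\PHgt/\PH$ (the paper writes this as $\Delta(x)=\Delta_0\,P_{\hat\cD}(x)/P_{\cD}(x)$, which collapses to your $w$ once you use consistency to cancel the $1-\hat p_M(x)$ factors), so that the training-side integral lands at $\Delta$ while the real-world integral becomes $\int \PHgt^2/\PH = D_{\chi^2}(\PHgt\|\PH)+1$. Your flag on the Pearson direction is well-placed --- the paper's own proof in fact yields $D_{\chi^2}(\PHgt\|\PH)$ rather than the $D_{\chi^2}(\PH\|\PHgt)$ written in the theorem statement --- and your truncation step for the set $\{\PH=0,\ \PHgt>0\}$ is a technical refinement the paper simply omits.
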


Detailed proof of this theorem and an analysis of the magnitude of $D$ can be found in the Appendix. 
Theorem \ref{thm: good data bad classifier} implies that the error rate of a binary classifier on the real-world human text distribution can be many times higher than the training error, due to the inherent heterogeneity of human-written text (e.g., diverse styles, domains, and author traits). Another way to pinpoint this problem is to consider a training dataset \textit{almost consistent} with $\hat{p}_M$, while showing high discrepancy at sparser text regions. A binary classifier trained on this dataset will inevitably overfit and fail to generalize to the real-world distribution. We present another theoretical result based on this line of thinking also in Appendix. Either way, this highlights a fundamental mismatch: binary classification assumes a closed-world setting, yet human-written text behaves as an open-world, high-variance distribution. 

To address this, we propose to reframe the machine-generated text detection as an OOD detection problem, where the machine-generated distribution $\PM= P_{\text{in}}$ is modeled explicitly as a narrow, well-characterized distribution, and any deviation from it is treated as out-of-distribution $\PH = P_{\text{out}}$ without attempting to model it directly. This $P_{\text{out}}$ may cover texts from unseen domains, stylistic outliers, noisy generations, languages, authors, or other edge cases not represented in the training data. This perspective not only aligns better with the theoretical properties of text generation, but also provides a more robust framework for handling the diversity and unpredictability of human language.

\subsection{Method Overview: OOD Detection Framework for Machine-Generated Text}
\label{sec:3_2_method_overview}
To overcome the limitations of binary classification in the context of machine-generated text detection, we adopt an out-of-distribution (OOD) detection framework based on the one-class and score-based learning paradigm. Rather than learning to discriminate between two classes, our method focuses solely on modeling the in-distribution, i.e., machine-generated text, which is typically compact and consistent due to its origin from a specific LLM. Our framework consists of two main components: 1) a \textbf{Text Encoder} (e.g. RoBERTa) that maps each input text $x \in \mathcal{X}$ into a high-dimensional representation $z = \phi_\theta(x) \in \mathbb{R}^d$. 2) an \textbf{OOD Detector} that learns an OOD decision boundary or score by designed OOD loss. Figure \ref{Fig:Architecture} illustrates the overall pipeline by taking DeepSVDD as a primary example of the second stage. 
\begin{figure}[t]
\begin{center}
\includegraphics[width=0.85\textwidth]{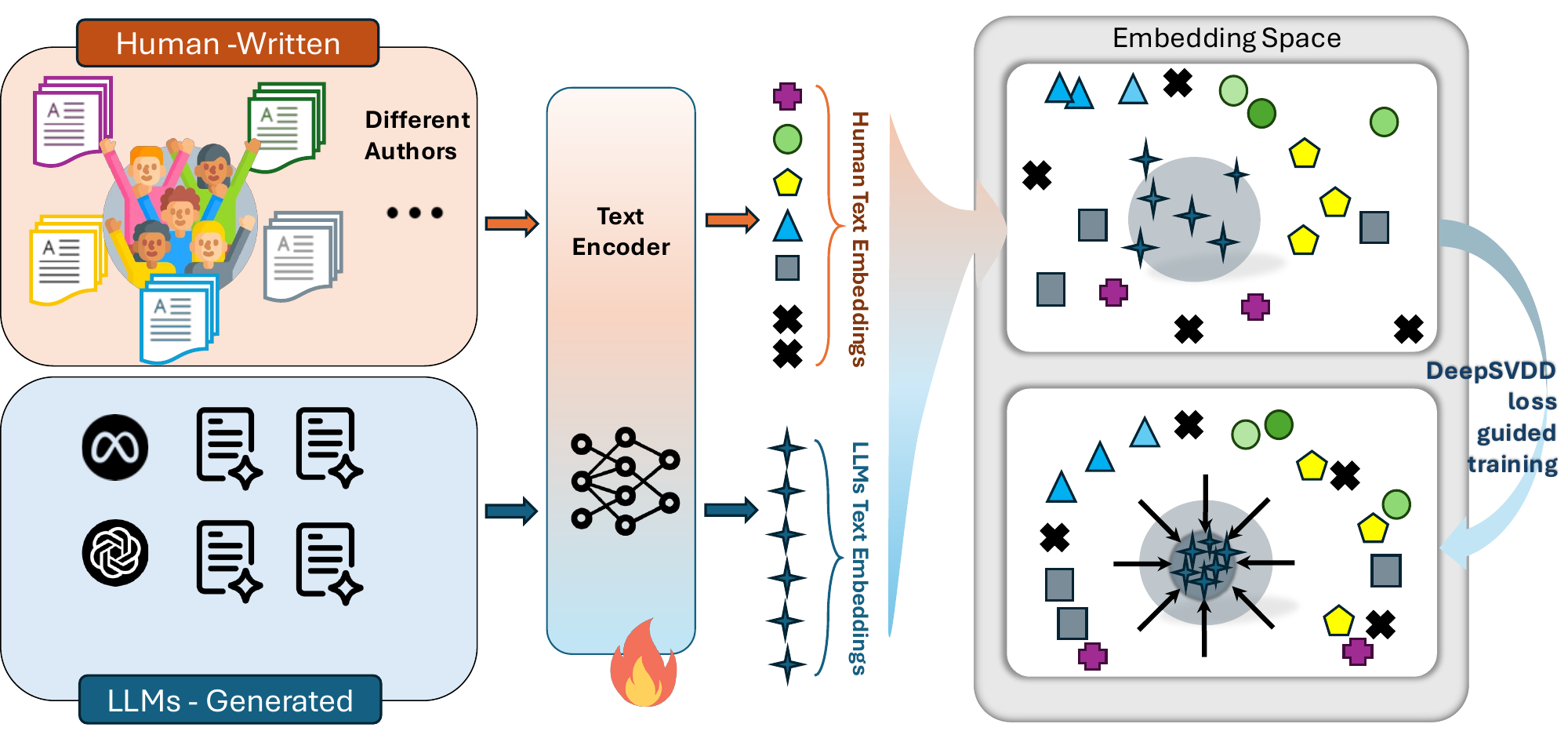}
\end{center}
\caption{An overview of our proposed OOD detection pipeline under DeepSVDD method. This pipeline shows the process of training a text encoder with a DeepSVDD loss. (Right) Illustration of the learned embedding space: LLM-generated texts are enclosed within a hypersphere, while human-written texts fall outside. }
\vspace{-0.3cm}
\label{Fig:Architecture}
\end{figure}
DeepSVDD learns a compact hypersphere around the in-distribution data in the embedding space. Overall, during the training stage, we compute a center point $c \in \mathbb{R}^d$ using only the embeddings of machine-generated samples, then optimize the encoder parameters using the following DeepSVDD loss: 
\begin{equation}
\label{eq:loss_dsvdd}
\mathcal{L}_{\text{DSVDD}} = \frac{1}{N} \sum_{i=1}^{N} \| \phi_\theta(x_i) - c \|^2.    
\end{equation}
The objective is to compute and minimize the distance of machine-generated samples from the center, which is intuitively the radius of the hypersphere, thereby ensuring that embeddings of machine-generated texts lie close to the center. Unlike binary classification, which attempts to model both classes under low-entropy supervision, our one-class method only learns the compact support of machine text, while allowing high-entropy human texts to remain unconstrained, treating any outlier as likely authored by a human. 

Besides, to further enhance the model's ability to distinguish between human- and machine-generated text, we incorporate a contrastive loss term based on SimCLR\cite{chen2020simple} during training, as shown in Equation \ref{eq:contrastive}. $q$ is the current sample. $Z^+$ is the collection of positive samples, while $Z^-$ is the collection of negative samples. $N_{Z^+}$ is the size of positive sample collection. $\tau$ refers to the temperature coefficient. $S(\cdot)$ is the similarity computation, which is cosine similarity in our experiments.
\begin{equation}
\label{eq:contrastive}
    \mathcal{L}_{\text{contrastive}} = -\log \frac{
\exp\left( \sum_{z \in Z^+} \frac{S(q, z)}{\tau} / N_{Z^+} \right)
}{
\exp\left( \sum_{z \in Z^+} \frac{S(q, z)}{\tau} / N_{Z^+} \right)
+ \sum_{z \in Z^-} \exp\left( \frac{S(q, z)}{\tau} \right)
}.
\end{equation}

\subsection{Alternative OOD Detection Losses: HRN and Energy-Based Methods}
\label{sec:3_3_ood_methods}
While we describe DeepSVDD as a representative OOD detection approach, our framework is model-agnostic and supports a variety of one-class and score-based OOD detection methods. In this section, we present two additional techniques HRN \cite{hu2020hrn} and Energe-based OOD detection \cite{liu2020energy} that instantiate the same pipeline with alternative OOD loss formulations. These methods are designed to improve detection reliability under varying conditions. Each is seamlessly integrated into the same architecture described in Section \ref{sec:3_2_method_overview}, replacing the DeepSVDD loss with their OOD loss function.

\textbf{HRN} is a one-class learning method designed to improve robustness and generalization in OOD detection. HRN only has one class/head in the output, it uses $\text{Sigmoid}(f(\phi(x)))$ as a score or the probability of given text $x$ belonging to the in-class. During training, learning the given class in the distribution with its training data, its one-class loss is:
\begin{equation}
    \mathcal{L}_{\text{HRN}} = \mathbb{E}_{x \sim \mathcal{D}_{\text{in}}^{\text{train}}} \left[ -\log ( \text{Sigmoid} ( f(\phi(x)) ) )\right] + \lambda \cdot \| \nabla_x f(\phi(x)) \|_2^n,
\end{equation}
where $f$ is a one-class classifier, $\phi(x)$ is the high-dimensional embedding, Sigmoid($\cdot$) $\in (0,1)$ is a score mapping the one class head to logit, and $n$ and $\lambda$ are hyper-parameters controlling the strength of the penalty and balancing the regularization respectively. By adding an H-Regularization term with 2-Norm instance-level normalization to the Negative Log Likelihood for one class, this HRN loss improves over prior approaches by penalizing excessive sensitivity to specific input directions, promoting more robust and generalizable decision functions.

\textbf{Energy-based} OOD detection methods assign an energy score $E(x)$ to each input, indicating its compatibility with the in-distribution. Lower energy implies a higher likelihood of being in-distribution. The energy of a sample is defined from softmax logits $f(x)$ as:
\begin{equation}
\label{eq:energy}
E(x) = -\log \sum_{i} \exp(f_i(x)).
\end{equation}
We adopt an energy-based learning objective that explicitly encourages an energy gap during training. The learning objective is:
\begin{align}
    \mathcal{L}_{\text{energy}} = \ &\mathbb{E}_{(x,y) \sim \mathcal{D}_{\text{in}}^{\text{train}}} \left[ -\log F_y(x) \right] \notag \\&+ \lambda \cdot \left\{ \mathbb{E}_{x \sim \mathcal{D}_{\text{in}}^{\text{train}}} \left[ \max(0, E(x) - m_{\text{in}})^2 \right] + \mathbb{E}_{x \sim \mathcal{D}_{\text{out}}^{\text{train}}} \left[ \max(0, m_{\text{out}} - E(x))^2 \right] \right\},
\end{align}
where $F_y(x)$ is the softmax output of the classifier. Importantly, the softmax classifier is trained only on in-distribution data—i.e., machine-generated texts from multiple sources—using cross-entropy loss to distinguish between different LLM generators. The second is an energy regularization term, that combines two squared hinge loss terms with separate margin hyperparameters $m_{\text{in}}$ and $m_{\text{out}}$. This loss penalizes in-distribution samples whose energy exceeds a predefined margin $m_{\text{in}}$, and simultaneously penalizes OOD samples with energy lower than another threshold $m_{\text{out}}$. The resulting model learns a contrastively shaped energy surface, creating a clearer margin between ID and OOD samples in energy space.

\subsection{Overall Training Objectives}
The final training objective thus combines the OOD loss with a supervised contrastive objective:
\begin{align}
\mathcal{L}_{\text{Total}} = \alpha \cdot \mathcal{L}_{\text{OOD}} + \beta \cdot \mathcal{L}_{\text{contrastive}}, \quad \mathcal{L}_{\text{OOD}}  \in \{\mathcal{L}_{\text{DSVDD}}, \mathcal{L}_{\text{HRN}}, \mathcal{L}_{\text{energy}}  \}
\end{align}

where $\alpha$ and $\beta$ is a weighting coefficient. This joint training scheme allows the encoder to benefit from both unsupervised in-distribution modeling and supervised human–machine discrimination, leading to more robust representation learning.

\noindent \textbf{Inference.} For Deep-SVDD, we compute an OOD score for a given text using the distance from the center, if the score exceeds a threshold, the text is flagged as out-of-distribution and thus considered human-written. As for HRN, during inference time, texts with low sigmoid scores are considered as OOD samples, which is human-written text. Finally, for energy-based method, the energy score  $E(x)$ is computed for a given input using the log-sum-exp of the softmax logits. An input is classified as out-of-distribution if its negative energy score $-E(x)$ falls below a predefined threshold.

\section{Experiments}
\subsection{Experimental Setup}
\noindent \textbf{Datasets.} We test our method on three widely-used and challenging datasets including DeepFake \cite{li2023deepfake}, M4 \cite{wang2023m4}, and RAID \cite{dugan2024raid}. The Deepfake dataset comprises text generated by 27 large language models (LLMs) alongside human-written content sourced from multiple websites across 10 distinct domains, totaling 332K training and 57K test samples. It defines six diverse evaluation scenarios, including cross-domain, unseen domain, and model detection settings. Moreover, the M4 dataset is a comprehensive benchmark spanning multiple domains, models, and languages, comprising data from 8 large language models (LLMs), 6 domains, and 9 languages. We test our method and other baselines on multilingual settings, which include  157K training and 42K testing data. The test dataset of M4 also includes the unseen models from the training set. Finally, RAID is a large-scale LLM detection benchmark with normal AI-generated texts and different attacked texts including paraphrase, synonym swap, misspelling, and so on. We hold out 10\% of training data as validation set for evaluation.  During training, we filter the attacked samples in the training set.

\noindent \textbf{Evaluation Metrics.} To comprehensively evaluate the detection results, we follow DNA-GPT \cite{yang2023dna}, FastDetectGPT\cite{bao2023fast} and DALD \cite{NEURIPS2024_62e5f22b} and report the accuracy in the area under the receiver operating characteristic (AUROC) and the area under the precision and recall (AUPR) to evaluate
the performance of all methods. Moreover, we follow \cite{liu2020energy} to measure  the false positive rate (FPR95) when true positive rate is at 95\%.

\noindent \textbf{Baselines.} We compare our method with both zero-shot and training-based methods. For zero-shot methods, we compare with DetectGPT \cite{mitchell2023detectgpt}, FastDetectGPT 
 \cite{bao2023fast}, DNA-GPT \cite{yang2023dna}, Binoculars \cite{hans2024spotting}, DetectLLM LRR and NPR \cite{su2023detectllm}, DALD 
 \cite{NEURIPS2024_62e5f22b}, BiScope \cite{guo2024biscope}, and Glimpse \cite{bao2025glimpse}. For training-based methods, we include the comparison with RADAR \cite{hu2023radar}, GhostBuster \cite{verma2023ghostbuster}, GPT-Zero, and DeTeCtive \cite{guo2024detective}. We utilize the official implementation of each baseline. More information about the baselines can be found in the supplementary material.

\noindent \textbf{Implementation Details.} We adopt three OOD detection methods including DeepSVDD, HRN and Energy-based methods for our experiments. For DeepSVDD, we use the machine texts from the training set to compute the initial center and compute the corresponding loss with the center. For HRN, follow its original setting by training a one-class classifier per model family using its corresponding data as the positive class and averaging their scores at inference time. For energy-based method, we follow \cite{liu2020energy} to choose hyper-parameters, where $m_{in} = -27$ and $m_{out} = -2$. DeepSVDD is trained from scratch, and HRN and Energy load the pre-trained weights of DeTeCtive. The learning-rate is set as 2e-5 and the optimizer is Adam \cite{kingma2014adam} with $\beta_1 = 0.9$ and $\beta_2 = 0.98$. The loss weights $\alpha$ and $\beta$ are set as 1 in our experiments. We train all model for 20 epochs. All experiments are conducted on 2*A100 GPUs.

\subsection{Main Results}

\begin{table*}[t]
\vspace{-0.3cm}
    \centering
    \caption{Detection result comparison of our method and baselines. Our method outperforms all previous zero-shot and training-based methods and achieves SoTA performance. $\dagger$: Due to the low detection speed, we randomly select 10K samples from each test set for evaluation.}
    \vspace{-0.5em}
    \resizebox{1\textwidth}{!}{
    \begin{tabular}{l|ccc|ccc|ccc}
    \toprule
    \multirow{2}{*}{\textbf{Method}}  & \multicolumn{3}{c|}{\textbf{DeepFake}} & \multicolumn{3}{c|}{\textbf{M4-multlingual}} & \multicolumn{3}{c}{\textbf{RAID}}  \\
   & \textbf{AUROC $\uparrow$} & \textbf{AUPR$\uparrow$} & \textbf{FPR95$\downarrow$}  & \textbf{AUROC$\uparrow$} & \textbf{AUPR$\uparrow$} & \textbf{FPR95$\downarrow$} &
   \textbf{AUROC$\uparrow$} & \textbf{AUPR$\uparrow$} & \textbf{FPR95$\downarrow$}   \\ 
    
    \midrule
    DetectLLM$^\dagger $ \cite{su2023detectllm} &59.4 &62.5 & 95.0  &86.7 &83.2 &46.0 &77.6 &80.5 &78.6 \\
    DetectGPT$^\dagger$ \cite{mitchell2023detectgpt}&58.1 &59.4 &90.2&70.5 &67.3 &80.4 &59.5& 77.3&70.5\\
    DNA-GPT$^\dagger$ \cite{yang2023dna}&57.9 & 56.0 &96.5 &66.2 &61.2 &74.1 &61.3 &97.3 &95.8\\
    F-DetectGPT\cite{bao2023fast} &60.3 &70.8 & 99.9 &87.7 &91.6 &72.8 &77.6 &99.2 &99.3\\
    Binoculars \cite{hans2024spotting} &61.7 &52.8 &66.7 &87.7 &76.6 &27.8 &82.4 &7.3 &40.4 \\
    DALD \cite{NEURIPS2024_62e5f22b}&56.2 & 55.0 &97.4 &63.2 &66.6 &64.6 &59.4 &67.1 &64.6\\
    Glimpse \cite{bao2025glimpse}&69.5 &74.5 &91.4 & 88.2 &89.0 &52.8 &79.5 & 99.1 &81.2\\
    \hdashline 
    GPTZero \cite{tian2023gptzero}&59.3 &70.4 &92.4 &67.5 &76.7 &89.8 &62.7 &68.5 & 92.0\\
    RADAR \cite{hu2023radar} &61.2 &55.3 &86.3 &68.3 &76.1 &92.1 &82.9 &69.2 &64.1\\
    GhostBuster \cite{verma2023ghostbuster} &52.9 &51.1 & 93.2 & 62.5 & 63.2 &85.6 & 66.7 &98.1 &82.4\\
    BiScope \cite{guo2024biscope} &79.2 &85.0 &84.3&83.0 &90.8 &84.8 &78.0&71.1 &86.8  \\
    DeTeCtive \cite{su2023detectllm} &95.6 &97.1 &16.1 &93.1 &95.3 &52.4 &85.1 &55.7 &74.3 \\
    \hline
    Ours (D-SVDD) &\textbf{98.3} &\textbf{98.3} &8.9 &\textbf{96.4} &96.3 &18.2 & \textbf{94.7} &73.0 &38.3 \\
    Ours (HRN) &97.0 &97.2 & \textbf{4.4} &\textbf{96.4} &\textbf{96.7} &24.9 &88.1 &70.1 &57.5\\
    Ours (Energy) &97.6 &97.2 & 12.3 &96.2 &95.8 &\textbf{10.7} &93.6 &\textbf{99.7} &\textbf{28.3}\\
    \bottomrule
    \end{tabular}
    }
    \label{tab:main_table}
\end{table*}

We conduct a comprehensive evaluation of our methods against a diverse set of baselines on two challenging benchmarks: DeepFake and M4-multilingual, as present in Table ~\ref{tab:main_table}. On the DeepFake dataset, our DeepSVDD model achieves the highest AUROC (98.3) and AUPR (98.3), surpassing all other training-based methods including GPT-Zero, RADAR, GhostBuster. Moreover, our method substantially outperforms the strongest training-based baseline, DeTeCtive, which scores 95.6 AUROC and 97.1 AUPR. Additionally, DeepSVDD reduces the FPR95 from 16.1\% to 8.9\%, indicating a much lower false positive rate at high sensitivity. Our HRN and Energy-based variants also perform competitively, with HRN attaining the lowest FPR95 (4.4\%) among all methods while maintaining high precision (97.2 AUPR). In the M4-multilingual benchmark, where detectors must generalize across multiple languages and domains, our methods again lead in all metrics. DeepSVDD and HRN both reach an AUROC of 96.4, surpassing DeTeCtive’s 93.1. HRN also delivers the best AUPR (96.7), while the Energy-based method achieves a strong balance between recall and reliability, with a notably low FPR95 of 10.7, compared to DeTeCtive’s much higher 52.4. These results demonstrate the effectiveness and robustness of our approaches in \textit{cross-lingual} and \textit{domain-shifted} settings.

We also compare against zero-shot methods including FastDetectGPT and Binoculars, two recent approaches designed to detect LLM-generated content without task-specific training. While FastDetectGPT offers fast and training-free deployment, it suffers from significant performance limitations. On DeepFake, it achieves only 78.3 AUROC and 60.8 FPR95—over 20 points worse in AUROC and nearly seven times higher in FPR95 than DeepSVDD. Binoculars, a more recent zero-shot method, performs somewhat better with 88.1 AUROC and 41.2 FPR95, but still underperforms our models by a large margin. On M4-multilingual, FastDetectGPT’s AUROC drops to 66.1 and FPR95 increases to 74.7, while Binoculars shows moderate improvement (79.6 AUROC, 64.9 FPR95), yet both remain significantly below the performance of our proposed detectors.

Overall, our methods consistently outperform both training-based and zero-shot baselines across all metrics and benchmarks. The substantial gains in AUROC and AUPR, along with dramatic reductions in FPR95, demonstrate that our approach offers a robust and generalizable solution for detecting LLM-generated text, even in multilingual and domain-shifted environments.

\subsection{Experimental Analysis}
\noindent \textbf{Attack Robustness.} We also provide the results on RAID dataset, as shown in Table \ref{tab:main_table}. RAID is specifically designed to evaluate the robustness of LLM-generated text detectors under adversarial perturbations including paraphrasing, synonym substitution, sentence reordering, and so on.  Our proposed methods exhibit strong robustness to adversarial attacks, significantly outperforming both training-based and zero-shot baselines. Notably, the energy-based model achieves the highest AUPR (99.7), indicating exceptional precision-recall performance even under strong perturbations. This suggests that the model remains confident and discriminative even when the surface form of the generated text is substantially altered. Additionally, the DeepSVDD variant attains the highest AUROC (94.7) among all methods, demonstrating that it maintains strong overall ranking performance in the presence of adversarial noise. In contrast, the baseline methods show siginificant degradation on RAID. For example, DeTeCtive, while effective on clean data, sees its AUROC drop to 89.3 with a much higher FPR95 (58.9), indicating frequent misclassifications of perturbed generated samples as human-written. The zero-shot method FastDetectGPT performs particularly poorly, with only 59.2 AUROC and a very high FPR95 of 75.1, showing that it lacks the structural resilience needed for attack scenarios. Even Binocular, which performs better than FastDetectGPT, achieves only 72.4 AUROC and 64.3 FPR95—well below those of our proposed models. These results highlight a key strength of our approach: robust generalization to adversarially manipulated text. The combination of representation learning (in DeepSVDD and HRN) and energy-based scoring mechanisms provides a defense against common attack strategies that exploit shallow decision boundaries or overfitting.

    

\noindent \textbf{Generalizability.} To show the generalizability of our method, we test our method on DeepFake dataset in \textit{Unseen Domains} and \textit{Unseen Models} settings, which is split from \cite{guo2024detective}. The results are shown in Table ~\ref{tab:general}. In the unseen domain scenario, all variants of our method significantly outperform DeTeCtive. Specifically, our HRN-based method achieves the highest AUROC (98.0) and a near-optimal AUPR (97.8), with a considerably lower FPR95 (12.0) compared to DeTeCtive's 89.2. The DeepSVDD variant also performs competitively, achieving an AUROC of 97.9 and the lowest FPR95 of 10.2. Notably, the energy-based approach achieves the best FPR95 (8.3), suggesting that it is especially effective at reducing false positives, even though its AUROC (96.0) is slightly lower than the others. In the unseen model setting, similar performance is observed. The strong performance underscores our method’s robustness in identifying the instances from unseen models and domains with minimal false positives and demonstrates the generalizability of our method.
\begin{table*}[t]
\vspace{-0.3cm}
    \centering
    \caption{Detection results in comparison of baseline and our method in unseen domain and model settings. With different OOD heads, our method surpasses the DeTeCtive in all settings, demonstrating the generalizability of our method.}
    \vspace{-0.5em}
    \resizebox{.8\textwidth}{!}{
    \begin{tabular}{l|ccc|ccc}
    \toprule
    \multirow{2}{*}{\textbf{Method}}  & \multicolumn{3}{c|}{\textbf{Unseen Domain}} & \multicolumn{3}{c}{\textbf{Unseen Model}}   \\
   & \textbf{AUROC $\uparrow$} & \textbf{AUPR$\uparrow$} & \textbf{FPR95$\downarrow$}  & \textbf{AUROC$\uparrow$} & \textbf{AUPR$\uparrow$} & \textbf{FPR95$\downarrow$}    \\ 
    
    \midrule
    DeTeCtive &76.5 &87.5 & 89.2 & 84.3 &88.1 &70.8   \\
    \hline
    Ours (D-SVDD) &97.9 & \textbf{97.8} &10.2 &93.4 &92.3& \textbf{25.8}    \\
    Ours (HRN) &\textbf{98.0} &\textbf{97.8} &12.0 &\textbf{95.2} &\textbf{96.3} &30.9\\
    Ours (Energy) & 96.0 &93.2 & \textbf{8.3} &90.2 & 91.3 &38.1 \\
    \bottomrule
    \end{tabular}
    }
    \label{tab:general}
\end{table*}
\begin{table*}[t]
\vspace{-0.3cm}
    \centering
    \caption{Ablation study. The binary classification head is compared with each of our components.}
    \vspace{-0.5em}
    \resizebox{1\textwidth}{!}{
    \begin{tabular}{l|ccc|ccc|ccc}
    \toprule
    \multirow{2}{*}{\textbf{Method}}  & \multicolumn{3}{c|}{\textbf{DeepFake}} & \multicolumn{3}{c|}{\textbf{M4-multlingual}} & \multicolumn{3}{c}{\textbf{RAID}}  \\
   & \textbf{AUROC $\uparrow$} & \textbf{AUPR$\uparrow$} & \textbf{FPR95$\downarrow$}  & \textbf{AUROC$\uparrow$} & \textbf{AUPR$\uparrow$} & \textbf{FPR95$\downarrow$} &
   \textbf{AUROC$\uparrow$} & \textbf{AUPR$\uparrow$} & \textbf{FPR95$\downarrow$}   \\ 
    
    \midrule
    Classification Head &89.0 &94.0 &75.8 &84.6 &91.2 &83.2 &84.5	&71.9	&83.5  \\
    \midrule
    D-SVDD Loss &\textbf{98.3} &\textbf{98.3} &8.9 &\textbf{96.4} &96.3 &18.2 & \textbf{94.7} &73.0 &38.3 \\
    HRN Loss &97.0 &97.2 & \textbf{4.4} &\textbf{96.4} &\textbf{96.7} &24.9 &88.1 &65.9 &70.0\\
    Energy Loss &97.6 &97.2 & 12.3 &96.2 &95.8 &\textbf{10.7} &93.6 &\textbf{99.7} &\textbf{28.3}\\
    \bottomrule
    \end{tabular}
    }
    \vspace{-1em}
    \label{tab:abaltion}
\end{table*}

\noindent \textbf{Practical Insights on Method Choice.} According to the empirical results above, we can draw a conclusion and provide the insights about how to choose OOD detection methods: 1) In scenario which requires attack robustness, energy-based methods yield the best performance; 2) In scenario which requires better generalizability like unseen model and domains, HRN would be a better choice; 3) DeepSVDD balances both scenario.

\noindent \textbf{Ablation Study.}
We conduct an ablation study of our method on DeepFake, M4, and RAID datasets. The results are presented in Table ~\ref{tab:abaltion}. We compare our loss functions including D-SVDD, HRN and Energy against a standard binary classification head across three datasets. Across all benchmarks, our methods significantly outperform the classification head, which suffers from high FPR95 values (75.8–83.5). Notably, HRN achieves the lowest FPR95 (4.4) on DeepFake, while Energy loss attains the best AUPR (99.7) and lowest FPR95 (28.3) on RAID. All variants also consistently improve AUROC and AUPR, confirming the effectiveness of our loss designs in enhancing LLM detection performance and reducing false positives.
\section{Conclusion}
In this paper, we theoretically analyze the failure case and reason for treating LLM detection tasks as binary classification tasks and propose to transform the task to an out-of-distribution detection task. Moreover, we propose a novel LLM detection framework based on the OOD detection loss. We validate the effectiveness of our proposed framework with DeepSVDD, HRN and energy-based method, achieving SoTA performance on multiple benchmarks including multilingual, attacked and unseen-model and -domain scenarios.

{
\small
\bibliographystyle{unsrt}
\bibliography{citation}
}
\newpage
\appendix
\tableofcontents
\clearpage
\appendix
\newpage
\section{Proof of Theoretical Results}
\label{appendix:proof}

In this section, we provide formal justification for Theorem~\ref{thm: good data bad classifier}, discuss the magnitude of Pearson $\chi^2$ divergence, and provide an alternative theorem to further characterize the inevitability of overfitting binary classifiers to biased human-written text distributions. 

\subsection{Preliminaries}
We use integral $\int_{x\in\cX} \cdot \,dx$ to equivalently denote integral or summation over the data space $\cX$. For the ground truth classifier $\hat{p}_M(x)$, we introduce the following auxiliary definitions to assist the writing: 
$$ \hat{p}_H(x):=\hat{p}_M(x) $$
$$ \hat{p}(y=0 | x) := \hat{p}_H(x), \qquad \hat{p}(y=1 | x) := \hat{p}_M(x) $$

Before proving the theorem, we characterize the cross-entropy loss $\cL_{\mathrm{CE}}$: 
\begin{align}
\cL_{\mathrm{CE}} \big( f_{\theta} \big | \cD \big) 
&= -q_M \mathbb{E}_{x \sim \PM} \big[ \log p_\theta(y = 0 \mid x) \big] - q_H \mathbb{E}_{x \sim \PH} \big[ \log p_\theta(y = 1 \mid x) \big] \notag \\
&= -\int_{x \in \cX} \bigg[ q_M\PM(x) \log p_\theta(y = 0 \mid x) + q_H \PH(x) \log p_\theta(y = 1 \mid x) \bigg] \,dx \notag\\
&= -\int_{x \in \cX} \bigg[ \hat{p}_M(x) \log p_\theta(y = 0 \mid x) + \big( 1 - \hat{p}_M(x) \big) \log p_\theta(y = 1 \mid x) \bigg] P_{\cD}(x) \,dx \notag\\
&= \int_{x \in \cX} \cH \big( \hat{p}(\cdot|x), p_{\theta} (\cdot | x) \big) P_{\cD}(x) \,dx, \label{eq: ce loss}
\end{align}
where the third equality comes from Assumption~\ref{asmp: hm distinction}, $P_{\cD}(x) := q_M\PM(x) + q_H \PH(x)$ denotes the joint probability at $x$ of dataset $\mathcal{D}$. and $\cH(\cdot, \cdot)$ denotes the cross-entropy between two distributions. From the properties of cross-entropy, this is minimized when $\hat{p} = p_{\theta}$, i.e. when the model predictions exactly match the ground truth, with minimal value
\begin{equation}
\min_{\theta} \cL_{\mathrm{CE}} \big( f_{\theta} \big | \cD \big) = \EE_{x \sim  \cD} \cH \big( \hat{p}(\cdot|x) \big). 
\end{equation}

We also define The Pearson $\chi^2$ divergence below for future reference. 
\begin{definition} \label{def: pearson}
For two distributions $P_1, P_2$ over set $\cX$, the Pearson $\chi^2$ divergence between $P_1$ and $P_2$ is defined as
\begin{align}
D_{\chi^2} \big( P_1 \| P_2 \big) = \int_{x \in \cX} \frac{\big( P_1(x) - P_2(x) \big)^2} {P_2(x)} \,dx = \int_{x \in \cX} \frac{P_1^2 (x)} {P_2(x)} \,dx - 1
\end{align}
\end{definition}

\subsection{Analysis of the Pearson $\chi^2$ Divergence}
The effectiveness of Theorem~\ref{thm: good data bad classifier} depends on the magnitude of the Pearson $\chi^2$ divergence $D_{\chi^2} \big( \PHgt \| \PH \big)$: the larger this divergence, the larger the cross-entropy loss for a linear classifier well-fitted to the training set. Here we relate this value to the intuition that the training human-text data distribution is biased and cannot reflect open-world distribution. 

We model the intuition of ``open-world'' distribution $\PHgt$ by considering a training distribution $\PH$ that is a \textbf{shifted biased distribution} of $\PHgt$. More concretely, this means that there is a subset $\cX_0$ of $\cX$, such that
\begin{align}
\PH(x) = \begin{cases}
    C_1 \PHgt(x), x \in \cX_0, \\
    C_2 \PHgt(x), x \notin \cX_0, 
\end{cases} \notag
\end{align}
where $C_1$, $C_2$ are constants such that $C_1 \mu_{\PHgt} (\cX_0) + C_2 \big[ 1 - \mu_{\PHgt} (\cX_0) \big] = 1$ and $C_1 \gg C_2$, and $\mu_{\PHgt}$ denotes the measure with respect to $\PHgt$. Such a distribution follows the same probability ratios as $\PHgt$ within and without $\cX_0$ respectively, but is very close to $0$ outside $\cX_0$, hence most of the samples come from the partial region $\cX_0$. This is of course only an approximation of a realistic scenario, but it captures the essence of training data being only a partial observation of the whole picture. With this, the Pearson $\chi^2$ divergence is
\begin{align}
D_{\chi^2} \big( \PHgt \| \PH \big)
&= \int_{x \in \cX} \frac{\PHgt^2(x)} {\PH(x)} \,dx - 1 \notag\\
&= \int_{x \in \cX_0} \frac{1}{C_1} \PHgt(x) dx + \int_{x \in \cX \backslash \cX_0} \frac{1}{C_2} \PHgt(x) dx - 1 \notag\\
&= \frac{\mu_{\PHgt} (\cX_0)} {C_1} + \frac{1 - \mu_{\PHgt} (\cX_0)} {C_2} - 1, \notag
\end{align}
which \textbf{blows up to infinity} inverse-linearly as $C_2 \rightarrow 0$, corresponding to the likely case where the dataset is heavily biased towards a partial region of the text data space. 

\subsection{Proof of Theorem~\ref{thm: good data bad classifier}}

\begin{proof} [Proof of Theorem~\ref{thm: good data bad classifier}]
Let $\Delta(x) = \cH \big( \hat{p}(\cdot|x), p_{\theta} (\cdot | x) \big) - \cH \big( \hat{p}(\cdot|x) \big)$ be the suboptimality of $f_{\theta}$ at $x$. 
From \eqref{eq: ce loss}, 
\begin{align}
\cL_{\mathrm{CE}} \big( f_{\theta} \big | \cD \big) - \EE_{x \sim \cD} \cH \big( \hat{p}(\cdot|x) \big) &= \int_{x\in\cX} \Delta(x) P_{\cD}(x) \,dx, \label{eq: tr loss} \\
\cL_{\mathrm{CE}} \big( f_{\theta} \big | \hat{\cD} \big) - \EE_{x \sim \hat{\cD}} \cH \big( \hat{p}(\cdot|x) \big) &= \int_{x\in\cX} \Delta(x) P_{\hat{\cD}}(x) \,dx, \label{eq: gt loss}
\end{align}
Since cross-entropy is unbounded for $p_\theta(\cdot|x)$ given any $\hat{p}(\cdot | x)$, we can choose a classifier $f_\theta$ such that $\Delta(x) = \Delta_0 \times P_ {\hat{\cD}} (x) / P_ {\cD} (x)$ for any $x \in \cX$ given a target suboptimality $\Delta_0$. With this we have the training suboptimality
\begin{align}
\int_{x\in\cX} \Delta(x) P_{\cD}(x) \,dx
&= \int_{x\in\cX} \Delta_0 P_ {\hat{\cD}} (x) \,dx 
= \Delta_0, \notag
\end{align}
while the loss on ground truth dataset $\hat{\cD}$ is
\begin{align}
\int_{x\in\cX} \Delta(x) P_{\hat{\cD}}(x) \,dx 
&= \int_{x\in\cX} \Delta_0 \times\frac {P_{\hat{\cD}} ^2 (x)} {P_{\cD} (x)} \,dx \notag \\
&= \Delta_0 \int_{x\in\cX} \frac {\big( q_M \PMgt(x) + q_H \PHgt(x) \big) ^2} {q_M \PM(x) + q_H \PH(x)} \,dx.  \label{eq: gt loss 1}
\end{align}
Now notice that from Assumption~\ref{asmp: hm distinction}, the ratios
\begin{align}
\frac{q_M \PMgt(x)} {q_H \PHgt(x)} = \frac{q_M \PM(x)} {q_H \PH(x)} = \frac{\hat{p}_M(x)} {\hat{p}_H(x)}, \notag
\end{align}
therefore we can write \eqref{eq: gt loss 1} as
\begin{align}
\int_{x\in\cX} \Delta(x) P_{\hat{\cD}}(x) \,dx 
&= \Delta_0 \int_{x\in\cX} \bigg( \frac {\hat{p}_M (x)} {\hat{p}_H (x)} + 1 \bigg) \cdot q_H \frac {\PHgt ^2(x)} {\PH(x)} \,dx \notag \\
&\geq \Delta_0 \int_{x\in\cX} q_H \frac {\PHgt ^2(x)} {\PH(x)} \,dx \notag \\
&= q_H \Delta_0 \big[ D_{\chi^2} \big( \PHgt \| \PH \big) + 1 \big], \notag
\end{align}
thus finishing the proof. 
\end{proof}

\subsection{Alternative View: Binary Classifiers Overfit to Slightly Defected Datasets}
While theoretically reasonable, Assumption~\ref{asmp: hm distinction} usually does not hold for many data distributions. For example, if a training human-text distribution consists only of one ``style'' of texts, then the probability that some human-text $x^*$ of another style is sampled from this distribution would be close to $0$, even smaller than the probability that $x^*$ is sampled from the machine process, i.e. $\PH(x^*) \ll \PM (x^*)$. This leads to a violation of Assumption~\ref{asmp: hm distinction} in that the Bayes probability that $x^*$ is human-generated does not match $\hat{p}_M(x)$. In this scenario, binary classifiers actually can overfit to the training dataset and fail to generalize by a nonzero gap. 

To illustrate this, we characterize this dataset defect in the following definition in place of Assumption~\ref{asmp: hm distinction}. 
\begin{definition}
For a machine-human text data distribution $\cD = \{ q_M, q_H, \PM, \PH \}$, define its \textbf{posterior binary classifier} to be
\begin{align}
p_{\cD} (y = 1 | x) &= \frac{q_M \PM(x)} {q_M \PM(x) + q_H \PH(x)}, \notag \\
p_{\cD} (y = 0 | x) &= \frac{q_H \PH(x)} {q_M \PM(x) + q_H \PH(x)}. \notag
\end{align}
We define the \textbf{kwality} of $\cD$ as 
$$ \mathbf{K} (\cD) := \EE_{x \sim P_{\cD}(x)} D_{\mathrm{KL}} \big[ \hat{p}(\cdot | x) \big\| p_{\cD}(\cdot | x) \big], $$
which measures the expected KL-divergence between $p_{\cD}$ and $\hat{p}$ over joint text distribution $P_{\cD} (x) = q_M \PM(x) + q_H \PH(x)$, and say dataset $\cD$ is \textbf{$\delta$-suboptimal} if its kwality $\mathbf{K} (\cD) < \delta$. 
\end{definition}

\begin{remark}
A few things are worthy of note: 
\begin{itemize}
\item When the posterior distribution $p_{\cD} \rightarrow \hat{p}$, kwality reaches its minimum value $0$, while higher kwality values correspond to deviations in labeling and hence degradation of dataset. 
\item The idea behind this definition is that, realistically we cannot expect the training dataset to be perfectly accurate, and kwality more or less measures the deviation of the model from the ground truth classifier. 
\item The expectation in kwality is taken over text distribution $P(x)$, which means mislabeling data in \textit{sparser} text regions are less detrimental to kwality than \textit{denser} regions. This aligns with the fact that sparse regions contain less data points within the sampled dataset, where accuracy cannot be guaranteed, but also warrants less attention from the learning model. 
\end{itemize}
\end{remark}

Our main result is that fully training models on a near-optimal dataset can lead to overfitting, and result in a large validation loss on open-world data distribution. 

\begin{theorem} \label{thm: bad data good classifier}
There exists a $\delta$-suboptimal dataset $\cD$ such that for any binary classifier that achieves optimal cross-entropy loss on $\cD$, the generalization loss on open-world dataset $\hat{\cD}$ which complies with $\hat{p}$ (Assumption~\ref{asmp: hm distinction}) is at least $\delta D_{\chi^2} \big( P_{\cD} \big\| P_{\hat{\cD}} \big)$. 
\end{theorem}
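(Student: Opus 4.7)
The plan is to reduce both the kwality and the excess generalization loss to integrals of the \emph{same} nonnegative function $g(x)$ against two different measures, then construct $\cD$ adversarially so that $g$ concentrates in a region where the likelihood ratio $P_{\hat\cD}/P_\cD$ is large. By strict convexity of cross-entropy in its second argument, any classifier attaining the minimum of $\cL_{\mathrm{CE}}(\cdot|\cD)$ must agree with $p_\cD(\cdot|x)$ on the support of $P_\cD$. Using the pointwise identity $\cH(\hat p,p_\cD)=\cH(\hat p)+D_{\mathrm{KL}}[\hat p\,\|\,p_\cD]$ together with the same calculation that produced \eqref{eq: ce loss}, the excess generalization loss on $\hat{\cD}$ above the irreducible entropy $\EE_{x\sim\hat{\cD}}\cH(\hat p(\cdot|x))$ equals
\begin{equation*}
\Gamma \;=\; \int_\cX g(x)\,P_{\hat\cD}(x)\,dx, \qquad g(x) \;:=\; D_{\mathrm{KL}}\big[\hat p(\cdot|x)\,\big\|\,p_\cD(\cdot|x)\big]\ge 0,
\end{equation*}
while $\mathbf{K}(\cD)=\int g(x)\,P_\cD(x)\,dx$ by definition. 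The task thus reduces to constructing $\cD$ with $\int g\,P_\cD\,dx<\delta$ yet $\int g\,P_{\hat\cD}\,dx\ge\delta\,D_{\chi^2}(P_\cD\|P_{\hat\cD})$.

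The construction proceeds as follows. Pick any measurable $A\subset\cX$ with $\alpha:=P_{\hat\cD}(A)\in(0,1)$ and a small $\beta\in(0,1-\alpha)$ to be tuned. Define
\begin{equation*}
P_\cD(x) \;=\; \tfrac{\beta}{\alpha}\,P_{\hat\cD}(x)\ \text{on}\ A, \qquad P_\cD(x) \;=\; \tfrac{1-\beta}{1-\alpha}\,P_{\hat\cD}(x)\ \text{on}\ A^c,
\end{equation*}
so that $P_\cD$ underrepresents $A$ by the factor $\beta/\alpha\ll 1$ while sharing its support with $P_{\hat\cD}$ (which guarantees the optimal classifier is unique on the support of $P_{\hat\cD}$). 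Next choose the posterior pointwise: set $p_\cD(\cdot|x)=\hat p(\cdot|x)$ on $A^c$ so that $g\equiv 0$ there, and on $A$ pick $p_\cD(\cdot|x)$ so that $g(x)\equiv G$ for a tunable constant $G>0$. This is feasible because $D_{\mathrm{KL}}[\hat p(\cdot|x)\,\|\,q]$ takes every value in $[0,\infty)$ as the Bernoulli $q$ varies. Setting $q_H:=\EE_{P_\cD}[p_\cD(y=0|x)]$, $q_M:=1-q_H$, and $\PH(x):=p_\cD(y=0|x)P_\cD(x)/q_H$, $\PM(x):=p_\cD(y=1|x)P_\cD(x)/q_M$ then assembles a legitimate tuple $\cD=(q_M,q_H,\PM,\PH)$.

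A direct computation now gives $\mathbf{K}(\cD)=G\beta$ and $\Gamma=G\alpha$. Choosing $G$ slightly below $\delta/\beta$ makes $\mathbf{K}(\cD)<\delta$ strictly while $\Gamma$ is arbitrarily close to $\delta\alpha/\beta$. On the other side
\begin{equation*}
D_{\chi^2}(P_\cD\|P_{\hat\cD}) \;=\; \tfrac{\beta^2}{\alpha}+\tfrac{(1-\beta)^2}{1-\alpha}-1 \;\xrightarrow[\beta\to 0^+]{}\; \tfrac{\alpha}{1-\alpha},
\end{equation*}
which stays bounded while $\alpha/\beta\to\infty$. Hence for all sufficiently small $\beta$ the amplification ratio $\alpha/\beta$ strictly exceeds $D_{\chi^2}(P_\cD\|P_{\hat\cD})$, yielding the desired bound $\Gamma\ge\delta\,D_{\chi^2}(P_\cD\|P_{\hat\cD})$.

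The main obstacle is pinning down an explicit admissible range of $\beta$ for which the algebraic inequality $\alpha/\beta\ge\beta^2/\alpha+(1-\beta)^2/(1-\alpha)-1$ holds—transparent in the $\beta\to 0^+$ limit, but requiring a short case analysis to extract a clean sufficient threshold (for instance $\beta\le\alpha(1-\alpha)/(1+\alpha)$, verified by clearing denominators). A secondary but routine check is that the pointwise design of $p_\cD(\cdot|x)$ is coherent with a valid $(q_M,q_H,\PM,\PH)$ decomposition; this is automatic once $q_H$ is set to the induced marginal above, since $\PM$ and $\PH$ then integrate to one by construction.
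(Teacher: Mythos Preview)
Your argument is correct, but it takes a considerably longer road than the paper does. After the common opening—identifying the optimal classifier with $p_\cD$ and writing both $\mathbf K(\cD)$ and the excess generalization loss as integrals of the same $g(x)=D_{\mathrm{KL}}[\hat p(\cdot|x)\|p_\cD(\cdot|x)]$ against $P_\cD$ and $P_{\hat\cD}$ respectively—the paper dispenses with all the set-splitting and limiting in one line: simply choose $p_\cD(\cdot|x)$ so that $g(x)=\delta\,P_{\hat\cD}(x)/P_\cD(x)$ pointwise (feasible for any fixed $P_\cD$, since $q\mapsto D_{\mathrm{KL}}[\hat p\|q]$ sweeps $[0,\infty)$). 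Then $\mathbf K(\cD)=\delta\int P_{\hat\cD}=\delta$ and the excess generalization loss is exactly $\delta\int P_{\hat\cD}^2/P_\cD$, which is the chi-squared quantity (plus one). No region $A$, no parameter $\beta$, no limit, no algebraic inequality to verify.

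What your route buys: you construct $P_\cD$ explicitly rather than treating it as given, and you verify the $(q_M,q_H,\PM,\PH)$ decomposition concretely, which the paper leaves implicit. You also land squarely on the divergence with arguments ordered as in the statement, whereas the paper's one-line computation actually produces $\int P_{\hat\cD}^2/P_\cD = D_{\chi^2}(P_{\hat\cD}\|P_\cD)+1$, so the ordering in the paper's final line is a slip. Two minor cosmetic points on your side: your labeling associates $\PH$ with $y=0$, opposite to the paper's convention; and your amplification ratio $\alpha/\beta$ only dominates $D_{\chi^2}(P_\cD\|P_{\hat\cD})$ in the small-$\beta$ regime, so the existence claim relies on that limit—harmless, but the paper's pointwise choice of $g$ avoids any such asymptotic step.
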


\begin{proof}
From optimal cross-entropy loss, we get from \eqref{eq: ce loss} that
\begin{align}
0 &= \cL_{\mathrm{CE}} \big( f_{\theta} \big | \cD \big) - \EE_{x \sim \cD} \cH \big( p_{\cD}(\cdot | x) \big) \notag \\
&= \int_{x \in \cX} \bigg[ \cH \big( p_{\cD}(\cdot|x), p_{\theta} (\cdot | x) \big) - \cH \big( p_{\cD}(\cdot | x) \big) \bigg] P_{\cD}(x) \,dx, \notag
\end{align}
which means for any $x \in \cX$, 
\begin{equation*}
\cH \big( p_{\cD}(\cdot | x), p_{\theta} (\cdot | x) \big) = \cH \big( p_{\cD}(\cdot | x) \big) \Leftrightarrow p_{\theta}(\cdot | x) = p_{\cD}(\cdot | x). 
\end{equation*}
Now consider the generalization loss: 
\begin{align}
&\qquad \cL_{\mathrm{CE}} \big( f_{\theta} \big | \hat{\cD} \big) - \EE_{x \sim \hat{\cD}} \cH \big( \hat{p}(\cdot | x) \big) \notag\\
&= \int_{x \in \cX} \bigg[ \cH \big( \hat{p} (\cdot|x), p_{\theta} (\cdot | x) \big) - \cH \big( \hat{p} (\cdot | x) \big) \bigg] P_{\hat{\cD}}(x) \,dx \notag \\
&= \int_{x \in \cX} D_{\mathrm{KL}} \big[ \hat{p}(\cdot|x) \big\| p_{\cD} (\cdot | x) \big] P_{\hat{\cD}}(x) \,dx, \notag
\end{align}
Since KL-divergence is unbounded, we can choose $\hat{p}(\cdot | x)$ for each $x \in \cX$ such that $D_{\mathrm{KL}} \big[ \hat{p}(\cdot|x) \big\| p_{\cD} (\cdot | x) \big] = \delta P_{\hat{\cD}}(x) / P_{\cD} (x)$, which guarantees $\mathbf{K}(\cD) = \delta$, while
\begin{align}
\cL_{\mathrm{CE}} \big( f_{\theta} \big | \hat{\cD} \big) - \EE_{x \sim \hat{\cD}} \cH \big( \hat{p}(\cdot | x) \big) 
&\geq \int_{x \in \cX} \delta \times \frac{P_{\hat{\cD}} ^2(x)} {P_{\cD}(x)} \,dx = \delta D_{\chi^2} \big( P_{\cD} \big\| P_{\hat{\cD}} \big). \notag
\end{align}
\end{proof}

\newpage
\section{Detail of Experimental Setup}
\subsection{Datasets}
In this section, we discuss more details of our dataset in our experiments. 
\begin{itemize}
    \item \textbf{DeepFake}. The Deepfake dataset comprises text generated by 27 large language models (LLMs) alongside human-written content sourced from multiple websites across 10 distinct domains, totaling 332K training and 57K test samples. It defines six diverse evaluation scenarios, including cross-domain, unseen domain, and model detection setting. In our main experiment, we use the cross-domain and cross-model setting, which includes various domains and models in both the training and testing set. For the generalizability validation experiments, we utilize the Unseen Domains and Unseen Models of DeepFake, where the former setting means there are unseen-domain texts in the testing set and the latter setting indicates that there are texts generated by unseen LLMs during testing.
    
    \item \textbf{M4}. The M4 dataset is a comprehensive benchmark spanning multiple domains, models, and languages, comprising data from 8 large language models (LLMs), 6 domains, and 9 languages. It includes human-written content sourced from platforms such as Wikipedia, WikiHow \cite{koupaee2018wikihow}, Reddit, arXiv, and PeerRead\cite{kang2018dataset}.  Leveraging human-authored prompts, models including ChatGPT \cite{achiam2023gpt}, DaVinci-003, LLaMA \cite{touvron2023llama}, FLAN-T5 \cite{chung2024scaling}, Cohere \cite{kuchler2024cohere} and so on generate outputs across nine languages, including English, Chinese, and Russian. As part of the M4 initiative, a competition \cite{wang2024semeval} is organized to evaluate the detection of AI-generated text at both the paragraph and sentence levels. Two evaluation settings are used: monolingual and multilingual. The multilingual setting introduces new languages absent from the training and validation sets, with AI-generated texts also undergoing paraphrasing. We focus on multilingual setting since this is a more complicated and hard setting for correct LLM detection. The multilingual settings includes  157K training and 42K testing data. The test dataset of M4 also includes the unseen models from the training set.

    \item \textbf{RAID}. RAID\cite{dugan2024raid} is the largest and most comprehensive dataset available for evaluating AI-generated text detection systems. It comprises over 10 million documents, encompassing 11 large language models (LLMs), 11 content genres, 4 decoding strategies, and 12 types of adversarial attacks. The detailed information of RAID dataset can be found in Table ~\ref{tab:raid_composition}.  We hold out 10\% of training data as validation set for evaluation, which includes the texts with all kinds of attacks. We utilize the RAID dataset to compare the robustness of different methods on the attacked texts.  During training, we filter the attacked samples in the training set to train the model.
\end{itemize}

\begin{table}[ht]
\centering
\begin{tabular}{l|p{10cm}}
\hline
\textbf{Category} & \textbf{Values} \\
\hline
\textbf{Models} & ChatGPT, GPT-4, GPT-3 (text-davinci-003), GPT-2 XL, Llama 2 70B (Chat), Cohere, Cohere (Chat), MPT-30B, MPT-30B (Chat), Mistral 7B, Mistral 7B (Chat) \\
\hline
\textbf{Domains} & ArXiv Abstracts, Recipes, Reddit Posts, Book Summaries, NYT News Articles, Poetry, IMDb Movie Reviews, Wikipedia, Czech News, German News, Python Code \\
\hline
\textbf{Decoding Strategies} & Greedy (T=0), Sampling (T=1), Greedy + Repetition Penalty (T=0, $\theta$=1.2), Sampling + Repetition Penalty (T=1, $\theta$=1.2) \\
\hline
\textbf{Adversarial Attacks} & Article Deletion, Homoglyph, Number Swap, Paraphrase, Synonym Swap, Misspelling, Whitespace Addition, Upper-Lower Swap, Zero-Width Space, Insert Paragraphs, Alternative Spelling \\
\hline
\end{tabular}
\caption{RAID dataset composition. The table includes the information about the models, domains, decoding strategies, and adversarial attack kinds.}
\label{tab:raid_composition}
\end{table}

\subsection{Baseline Setup}
In this section, we discuss the details of the experiments setup on baseline models including zero-shot and training-based baselines. For DetectLLM, DetectGPT, DNA-GPT and FastDetectGPT, we utilize the official implementation of FastDectGPT\footnote{https://github.com/baoguangsheng/fast-detect-gpt}, which also includes the implementation of other methods. We utilize GPT-Neo-2.7B as the scoring model for all methods. T5-Small is used as the sampling model to do perturbation for DetectGPT and DetectLLM. We do 100 perturbation for each text. Due to the low speed of perturbation, we randomly select 10K samples from each benchmark for evalution for DetectLLM, DetectGPT and DNA-GPT. For DALD, we use the LoRA model trained by GPT-4 texts (which is demonstrated with the best generalizability in the original paper) based on Llama-2-7B as the scoring model. We follow the same experimental setting of Binoculars and apply the Falcon model to compute the final metric. For Glimpse, we utilize the official implementation and call OpenAI davinci-002 API to compute the geometric metric.

For training-based method, such as GPT-Zero, we utilize the open-source version for evaluation\footnote{https://github.com/BurhanUlTayyab/GPTZero}. For RADAR and GhostBuster, we apply their official implementation for testing with the updated OpenAI API model. Moreover, we adopt the official open-source implementation of BiScope and utilize gemma-2b as the scoring model.

\subsection{Implementation Detail}
We adopt three OOD detection methods including DeepSVDD, HRN and Energy-based methods for our experiments. For DeepSVDD, we use the machine texts from the training set to compute the initial center and compute the corresponding loss with the center. We freeze the parameters of the center point and disable the optimization on the center point. During inference, we compute the $L_2$ distance of the given sample and the center point as the probability of being human-written text. For HRN, follow its original setting by training a one-class classifier per model family using its corresponding data as the positive class and averaging their scores at inference time. We also follow the original hyperparameter settings, where $\lambda = 0.1$ and $n = 12$. For energy-based method, a classification head is attached following the backbone model.  We follow \cite{liu2020energy} to choose hyper-parameters, where $m_{in} = -27$ and $m_{out} = -5$. DeepSVDD is trained from scratch, and HRN and Energy load the pre-trained weights of DeTeCtive. The learning-rate is set as 2e-5 with batch size 32 per device (64 global batch size in our experiments), and the optimizer is Adam \cite{kingma2014adam} with $\beta_1 = 0.9$ and $\beta_2 = 0.98$. The loss weights $\alpha$ and $\beta$ are set as 1 in our experiments. We train all model for 20 epochs. All experiments are conducted on 2*A100 GPUs.
\section{More Experimental Results}
\begin{figure}
    \centering
    \subfloat[DeepSVDD]{\includegraphics[width=0.33\textwidth]{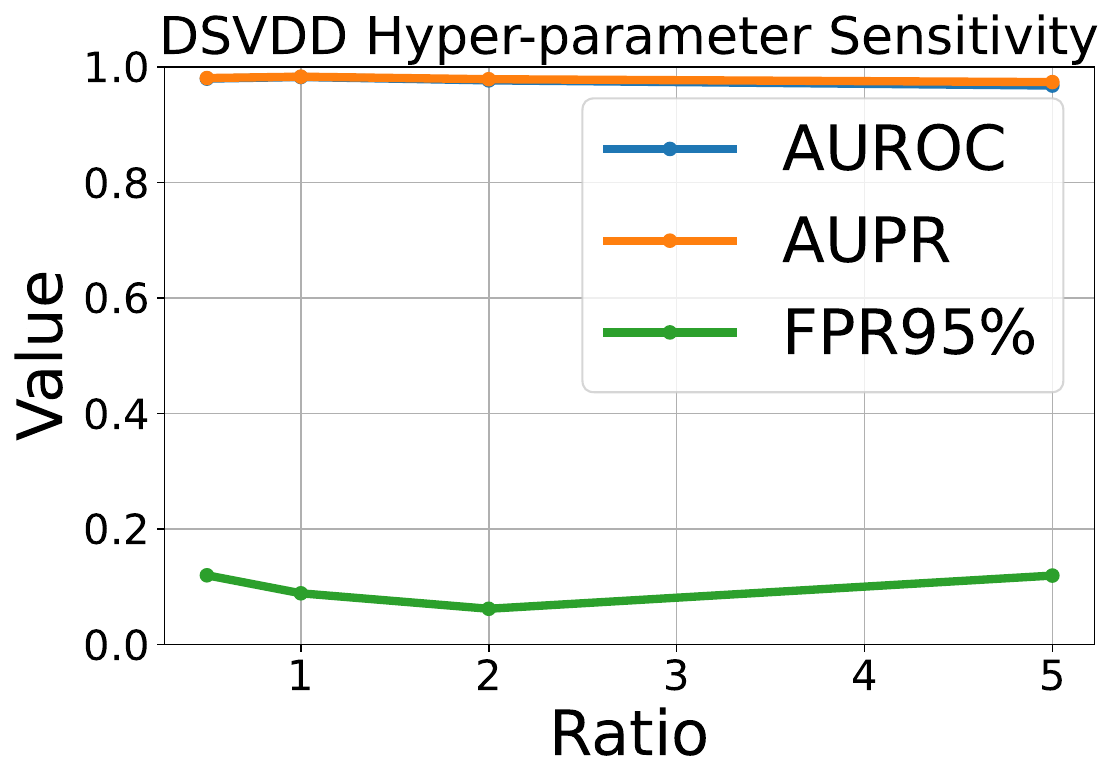}}
    \subfloat[HRN]{\includegraphics[width=0.33\textwidth]{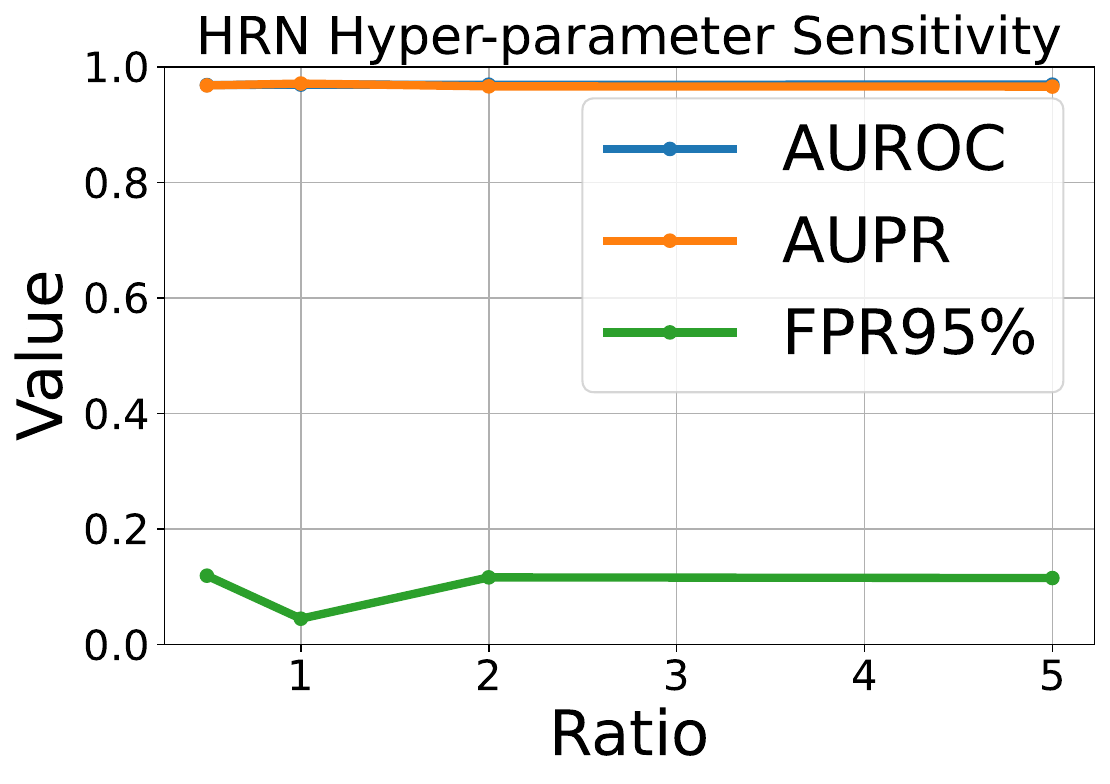}}
    \subfloat[Energy]{\includegraphics[width=0.33\textwidth]{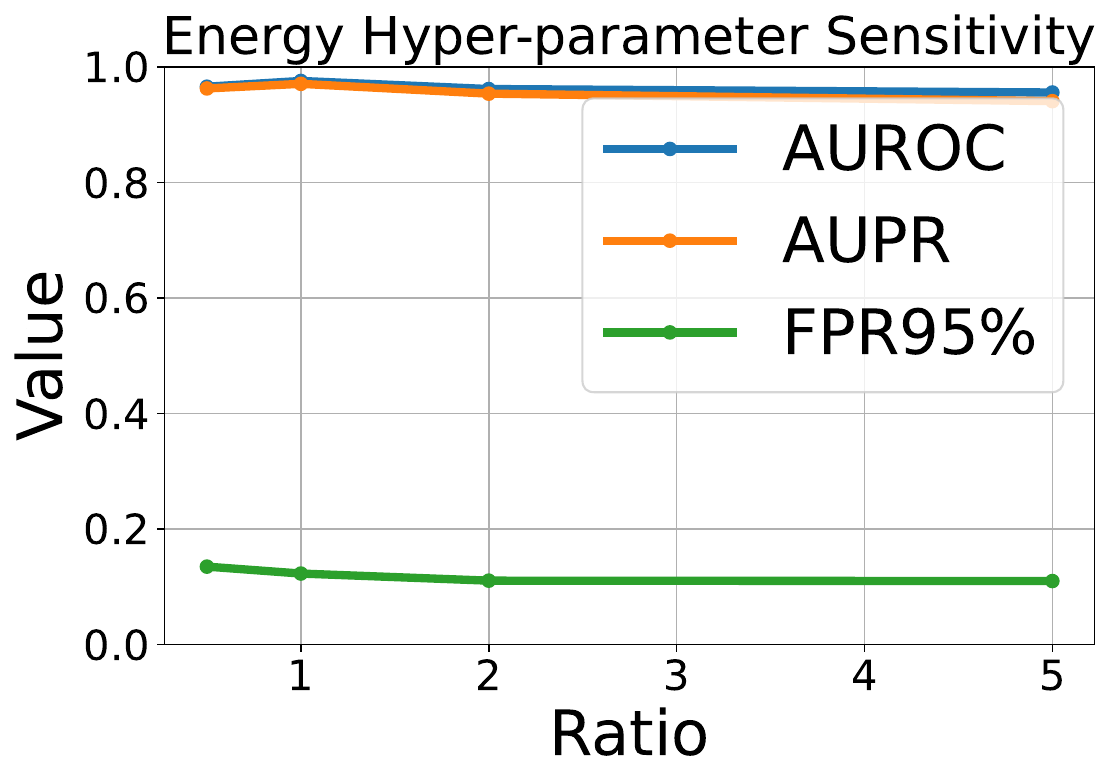}}
    \caption{Hyper-parameter sensitivity analysis of our method. The experiments are conducted on DeepFake dataset and show that our method is robust to the choice of weight. }
    \label{fig:hyperparameter}
\end{figure}
\subsection{Hyper-parameter Sensitivity Analysis}
For the hyperparameter introduced by the DeepSVDD, HRN and Energy-based method, we use the default setting from their original paper. In our work, the hyperparameter we introduce is the ratio of contrastive loss and OOD detection loss, namely $\alpha:\beta$. This ratio balances the contrastive loss and the OOD loss. We conduct a set of experiments with different the loss ratio settings on DeepFake dataset and the results are shown Figure ~\ref{fig:hyperparameter}. Overall, the results are stable and robust to the choice of weight. Ratio $1 : 1$  is a great choice which balanced well for all methods and metrics.

\begin{table*}[t]
    \centering
    \caption{Performance comparison with the baseline on more evaluation metrics including accuracy and F1 score.}
    \resizebox{.4\textwidth}{!}{
    \begin{tabular}{l|ccc}
    \toprule
    \multirow{2}{*}{\textbf{Method}}  & \multicolumn{3}{c}{\textbf{RAID}}  \\
   & \textbf{Accuracy $\uparrow$} & \textbf{F1$\uparrow$}    \\
    \midrule    
    DeTeCtive &96.5 &55.2  \\
    \midrule
    Ours (DSVDD) &98.6 &70.0 \\
    Ours (HRN) &98.5 &67.7\\
    Ours (Energy) &\textbf{98.7} &\textbf{99.4}\\
    \bottomrule
    \end{tabular}
    }
    \label{tab:more_metric}
\end{table*}

\subsection{More Metrics}
Besides the AUROC, AUPR and FPR at TPR 95\%, we also provide other metrics including the accuracy and F1 score for the comparison. We conduct the experiments on RAID dataset and report the results in Table ~\ref{tab:more_metric}. All settings of our method achieve better performance than the baseline on both accuracy and F1. Our energy-based method shows significant improvement, obtaining 98.7 and 99.4 on accuracy and F1, respectively.

\subsection{Different Backbones}
To show the affection of the backbone model for the LLM detection performance, we conduct a further experimental analysis with different training backbones on DeepFake dataset, as shown in Table ~\ref{tab:backbone}. We can observe that our method is robust to the choice of model backbone. All models generate reasonable results which high AUROC, AUPR and low FRP values.
\begin{table*}[t]
    \centering
    \caption{Different backbone model.}
    \vspace{-0.5em}
    \resizebox{.6\textwidth}{!}{
    \begin{tabular}{l|ccc}
    \toprule
    \textbf{Backbone Model}   & \textbf{AUROC $\uparrow$} & \textbf{AUPR$\uparrow$} & \textbf{FPR95$\downarrow$}  \\
    
    \midrule
    BERT$_{base}$ &96.9 &96.7 &13.0\\
    BERT$_{large}$ &93.1 &94.1 &45.3 \\
    RoBERTa$_{base}$ &96.1 &96.5 &16.1 \\
    FLAN-T5$_{base}$ &95.3 &95.6 &18.9   \\
    FLAN-T5$_{large}$ &96.9 &96.5 &9.7 \\
    SimCSE-BERT$_{base}$ &96.8 &96.9 &17.7 \\
    SimCSE-RoBERTa$_{base}$ & 98.3 &98.3 &8.9  \\
    E5-small	&96.7	&97.5	&11.4 \\
    E5-multilingual	&96.6	&96.6	&12.5 \\
    \bottomrule
    \end{tabular}
    }
    \label{tab:backbone}
\end{table*}

\subsection{Results on Different Domains}
\begin{table*}[t]
    \centering
    \caption{Results on different domains.}
    \vspace{-0.5em}
    \resizebox{.5\textwidth}{!}{
    \begin{tabular}{l|ccc}
    \toprule
    \textbf{Domain}   & \textbf{AUROC $\uparrow$} & \textbf{AUPR$\uparrow$} & \textbf{FPR95$\downarrow$}  \\
    
    \midrule
    cmv	&97.9	&98.3	&6.7 \\
    eli5	&97.1	&97.7	&12.9 \\
    hswag	&98.3	&97.8	&5.0 \\
    roct	&97.6	&96.5	&6.0 \\
    scigen	&98.7	&98.3	&4.0 \\
    squad	&99.3	&99.2	&2.9 \\
    tldr	&97.7	&97.7	&8.4 \\
    wp	&99.5	&99.5	&0.4 \\
    xsum	&97.7	&98.0	&7.0 \\
    yelp	&97.2	&97.0	&10.6 \\
    \bottomrule
    \end{tabular}
    }
    \label{tab:domain}
\end{table*}

We provide the results of model trained within specific domain on DeepFake dataset in DeepSVDD setting, as shown in Table ~\ref{tab:domain}. These strong per-domain performances indicate that the model effectively distinguishes LLM vs. human text within individual domains, further supporting the validity of our OOD formulation.

\subsection{Results on Newer Models}
\begin{table*}[t]
    \centering
    \caption{Results on newer models.}
    \vspace{-0.5em}
    \resizebox{.6\textwidth}{!}{
    \begin{tabular}{l|ccc}
    \toprule
    \textbf{Methods}   & \textbf{AUROC $\uparrow$} & \textbf{AUPR$\uparrow$} & \textbf{FPR95$\downarrow$}  \\
    
    \midrule
    DeTeCtive (RAID subset)	&87.2	&83.2	&79.7 \\
    Ours (RAID subset)	&93.2	&74.7	&39.1 \\
    \hline
    DeTeCtive (GPT4o)	&74.7	&87.3	&90.2 \\
    Ours (GPT4o)	&100.0	&100.0	&0.0 \\
    \bottomrule
    \end{tabular}
    }
    \label{tab:newer_model}
\end{table*}

We split the RAID test set to isolate newer models such as GPT-4, ChatGPT, and LLaMA-70B-Chat, and additionally evaluated our method on a recent benchmark, EvoBench \citep{yu-etal-2025-evobench}, using the XSum dataset with GPT-4o outputs, as shown in Table ~\ref{tab:newer_model}. We apply our DeepSVDD setting to train the model on RAID dataset and evaluate directly on RAID subset and GPT-4o set. These results show that our method maintains strong performance even on outputs from state-of-the-art models, demonstrating its effectiveness beyond older-generation LLMs.

\subsection{Results of LLM-generated Text as OOD Sample}
\begin{table*}[h]
    \centering
    \caption{Results of LLM-generated Text as OOD Sample.}
    \vspace{-0.5em}
    \resizebox{.6\textwidth}{!}{
    \begin{tabular}{l|ccc}
    \toprule
    \textbf{Methods}   & \textbf{AUROC $\uparrow$} & \textbf{AUPR$\uparrow$} & \textbf{FPR95$\downarrow$}  \\
    
    \midrule
    LLM as OOD	&69.6	&67.7	&79.1 \\
    Human as OOD	&93.8	&94.1	&33.7 \\

    \bottomrule
    \end{tabular}
    }
    \label{tab:llm_ood}
\end{table*}

We provide the result of setting machine-generated text as OOD, as shown in Table ~\ref{tab:llm_ood}. The results are trained on DeepSVDD and Deepfake dataset without contrastive loss. The strong performance of human-generated text as OOD indicates that modeling human text as OOD is a valid and effective formulation.

\subsection{Effects of $\alpha$ and $\beta$}

\begin{table*}[h]
    \centering
    \caption{Effects of $\alpha$ and $\beta$.}
    \vspace{-0.5em}
    \resizebox{.6\textwidth}{!}{
    \begin{tabular}{l|ccc}
    \toprule
    \textbf{Parameter Setting}   & \textbf{AUROC $\uparrow$} & \textbf{AUPR$\uparrow$} & \textbf{FPR95$\downarrow$}  \\
    
    \midrule
     $\beta= 0$ &88.6	&93.5	&76.1 \\
 $\alpha = 0$	&93.7	&94.1	&33.7 \\
 $\alpha = 1$, $\beta= 1$	&98.3	&98.3	&8.9 \\

    \bottomrule
    \end{tabular}
    }
    \label{tab:effect}
\end{table*}

We will provide the results when 
 $\alpha$ and $\beta$ are set to 0 in DeepSVDD setting, as shown in Table ~\ref{tab:effect}. Without OOD loss (
 $\beta = 0$), the AUROC is degraded to 88.6 while with only OOD loss, the AUROC achieves 93.7 and the FPR at 95\% obtains 33.7, demonstrating the importance of OOD loss.

\subsection{Training and Inference Cost}

\begin{table*}[h]
    \centering
    \caption{Training and inference cost of different methods.}
    \vspace{-0.5em}
    \resizebox{.8\textwidth}{!}{
    \begin{tabular}{l|ccc}
    \toprule
    \textbf{Method}   & \textbf{Training Time per Epoch} & \textbf{Overall Training Time} & \textbf{Inference time}  \\
    
    \midrule
     FastDetectGPT	&N/A	&N/A	&72 min \\
    DeTeCtive	&0.67 h	&20 h (30 epoch)	&29 min \\
    Ours	&0.67 h	&3.3 h (5 epoch)	&4 min \\

    \bottomrule
    \end{tabular}
    }
    \label{tab:training_inference_cost}
\end{table*}

We provide the training and inference cost comparison of our method and baselines, as shown in Table ~\ref{tab:training_inference_cost}. The time of training and inference is evaluated on single 4090 GPU. The backbone we use is unsup-simcse-roberta-base. For the training time, the training time per epoch is similar to the DeTeCtive baseline since we share the same backbone model and the difference is the loss objective. However, our method leads to faster convergence, which reduces the overall training cost compared with DeTeCtive. For the inference, our method is also the fastest since FastDetectGPT requires LLM inference (such as GPT-Neo-2.7B), which needs 10 $\times$ 
 more computation and DeTeCtive needs database construction and KNN for searching, which brings extra inference time.

\subsection{Limitation and Ethical Statement}
In this paper, we theoretically and empirically analyze why and how to model LLM text detection task as the out-of-distribution detection task. However, current detection can only tell whether the given text is AI-generated or human-written, which lacks further interpretation about how the text is generated by LLM. Therefore, we hope the user uses the detection result as a reference and doesn't make decisions solely based on the detection results, especially in areas such as academia.

\newpage

\end{document}